\documentclass[conference]{IEEEtran}
\IEEEoverridecommandlockouts
\usepackage{cite}
\usepackage{amsmath,amssymb,amsfonts,amsthm}
\usepackage{algorithm,algorithmic}
\usepackage{graphicx}
\usepackage{float}
\usepackage{threeparttable}
\usepackage{array,makecell,booktabs,multirow,siunitx,bm}
\usepackage{textcomp}

\usepackage[table]{xcolor}
\usepackage{color}

\newtheorem{proposition}{Proposition}
\newtheorem{lemma}{Lemma}
\usepackage{hyperref}
\hypersetup{hidelinks,
	colorlinks=true,
	allcolors=black,
	pdfstartview=Fit,
	breaklinks=true}

\begin{document}

\title{SLaB: Sparse-Lowrank-Binary Decomposition for Efficient Large Language Models}
\author{
\IEEEauthorblockN{Ziwei Li, Yuang Ma, Yi Kang\IEEEauthorrefmark{1}\thanks{\IEEEauthorrefmark{1}Corresponding Author}\thanks{This work has been submitted to the IEEE for possible publication. Copyright may be transferred without notice, after which this version may no longer be accessible.}}
\IEEEauthorblockA{School of Microelectronics, University of Science and Technology of China, Hefei, Anhui, China\\
heinzt@mail.ustc.edu.cn, eya\_ma@mail.ustc.edu.cn, ykang@ustc.edu.cn}
}
\maketitle

\begin{abstract}
The rapid growth of large language models (LLMs) presents significant deployment challenges due to their massive computational and memory demands. While model compression, such as network pruning, offers potential solutions, most existing methods often fail to maintain good performance at high compression ratios. To address this, we propose SLaB, a novel framework that decomposes each linear layer weight into three complementary components: a sparse matrix, a low-rank matrix, and a binary matrix. SLaB eliminates the need for retraining and leverages activation-aware pruning scores to guide the decomposition process. Experiments on Llama-family models demonstrate that SLaB achieves state-of-the-art performance, reducing perplexity by up to $36\%$ compared to existing methods at $50\%$ compression and improving accuracy by up to $8.98\%$ over the baseline on zero-shot tasks.
\end{abstract}

\begin{IEEEkeywords}
Large Language Models, Pruning, Low-rank Decomposition
\end{IEEEkeywords}

\section{Introduction}
In recent years, large language models (LLMs) like GPT~\cite{GPT2, GPT3} and BERT~\cite{BERT} have driven significant progress in artificial intelligence. However, their large size and high computational demands make them difficult to deploy on resource-constrained devices.
Model compression~\cite{Modelcompression, Modelcompression2, Modelcompression3, Modelcompression4}, such as network pruning~\cite{OBD, OBS}, is an important solution that can effectively remove redundant parameters. However, pruning LLMs during training is computationally expensive. Many studies~\cite{SparseGPT, Wanda, SliceGPT, DSNoT} investigate one-shot pruning approaches for LLMs, while one-shot pruning causes significant accuracy degradation at high sparsity levels.
Meanwhile, low-rank decomposition like ASVD~\cite{ASVD} and SVD-LLM~\cite{SVD-LLM} offers a hardware-friendly approach. Combining sparsity with low-rank approximation may achieve higher accuracy at a high compression ratio. But we experimented with simply combining a sparsification method with a low-rank matrix, which yields poor results, as shown in Figure~\ref{fig:only_lowrank}.

\begin{figure}[ht]
\centering
\includegraphics[width=6cm]{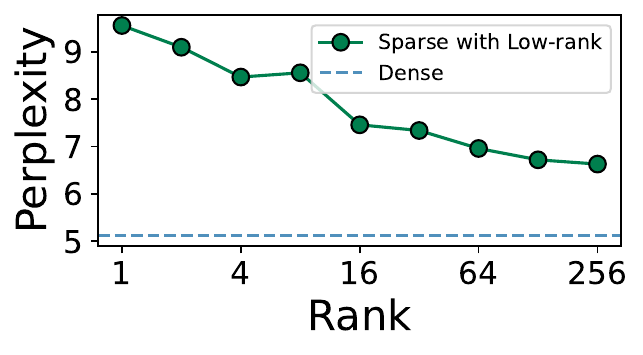}
\caption{Compression of the Llama-2 7B model~\cite{Llama2} using only low-rank and sparse matrices: perplexity comparison on the WikiText-2 dataset under different rank settings at a $50\%$ compression ratio.}\label{fig:only_lowrank}
\end{figure}

To solve these problems, we propose \textbf{Sparse-Lowrank-Binary Decomposition (SLaB)}, a novel pruning framework that combines low-rank and sparsity while using binary components as much as possible to simplify computations. More details are shown in Figure~\ref{fig:SLaB}. Our key insight is to decompose each linear layer weight $\mathbf{W}$ into three matrices:
\begin{equation}
\mathbf{W}=\mathbf{W}_{\text{S}}+\mathbf{W}_{\text{L}}\odot \mathbf{W}_{\text{B}},\label{eq:Linear_Layer_Decomposition}
\end{equation}
where $\odot$ denotes the Hadamard product, $\mathbf{W}_{\text{S}}$, $\mathbf{W}_{\text{B}}$ and $\mathbf{W}_{\text{L}}$ denote the sparse matrix, low-rank matrix, and binary matrix, respectively. $\mathbf{W}_{\text{B}}\in\{+1,-1\}$ has only binary elements which is hardware-friendly.

The fundamental idea of SLaB is to sparsify weight matrices in every linear layer of LLM while using an approach to compensate for the performance loss from sparsity. In SLaB, the original weight matrix can be replaced with a sparse matrix plus a low-rank matrix, which is used to compensate for loss. And finally, combining a binary matrix with a low-rank matrix can significantly reduce the required rank.

Our contributions are as follows: (1) we propose a novel decomposition method to decompose weight matrix in a linear layer into 3 matrices: a sparse matrix, a low-rank matrix, and a binary matrix to achieve weight compression; (2) we present an analysis to derive an algorithm for constructing these three matrices and (3) we evaluate SLaB on the widely adopted Llama-family models. Our results demonstrate that SLaB can retain strong performance without any training and outperform previous state-of-the-art pruning methods~\cite{SparseGPT, Wanda}.

\begin{figure*}[!t]
\centering
\includegraphics[width=13cm]{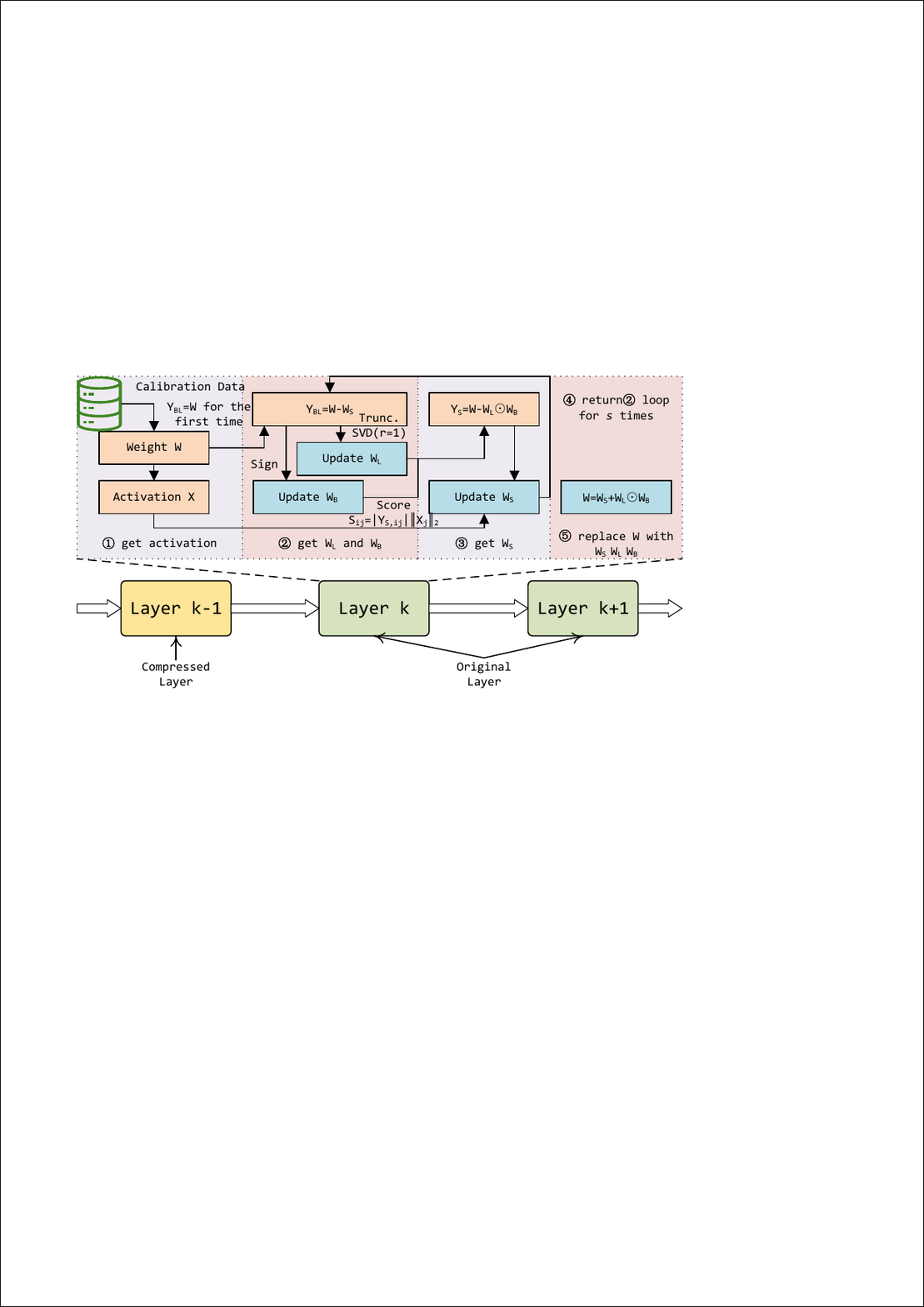}
\caption{Overview of the SLaB framework.}
\label{fig:SLaB}
\end{figure*}

\section{Methodology}
This section introduces SLaB. First, an optimization strategy is employed to transform the complex optimization problem into a single-matrix optimization task, with its optimization process being analyzed. Subsequently, the complete workflow is presented. Finally, key parameters are discussed.

Let the shape of weight matrix $\mathbf{W}$ be $(D_{\text{out}}, D_{\text{in}})$, and the shape of its corresponding activation matrix $\mathbf{X}$ be $(N \times L, D_{\text{in}})$, where $N$ and $L$ are the batch and sequence dimensions, respectively.

\subsection{Algorithm}
\subsubsection{Pruning Method}
Figure~\ref{fig:SLaB} provides an overview of the SLaB framework. At the weight level, to obtain $\mathbf{W}_{\text{S}}$, $\mathbf{W}_{\text{B}}$, and $\mathbf{W}_{\text{L}}$ in \eqref{eq:Linear_Layer_Decomposition}, we employ an alternating optimization strategy by fixing the other matrices during each update similar to the approach in \cite{RPCA, RPCA2, RPCA3}. Repeating this process for several iterations is sufficient for convergence. And at the layer level, we employ the layer-wise pruning approach~\cite{SparseGPT, Wanda}, which, in one-shot pruning, involves performing the following steps layer by layer: (1) forward propagation, (2) pruning, and (3) updating the layer's output after pruning.

\subsubsection{The Optimization of \texorpdfstring{$\mathbf{W}_{\text{S}}$}{WS}}
Given the integration of activation influences into the weight matrices via the Wanda~\cite{Wanda} technique, the optimization of $\mathbf{W}_{\text{S}}$ becomes straightforward. By fixing the remaining components and assuming $\mathbf{W}-\mathbf{W}_{\text{L}}\odot \mathbf{W}_{\text{B}}=\mathbf{Y}_{\text{S}}$, pruning is performed based on the magnitude of scoring matrix $\mathbf{S}$, where $\mathbf{S}_{ij}=|\mathbf{Y}_{\text{S},ij}|\cdot \Vert\mathbf{X}_j\Vert_2$. 

\subsubsection{The Optimization of \texorpdfstring{$\mathbf{W}_{\text{B}}$}{WB}}
Similarly, by fixing the remaining components and introducing the assumption $(\mathbf{W}-\mathbf{W}_{\text{S}})\oslash \mathbf{W}_{\text{L}}=\mathbf{Y}_{\text{B}}$, where $\oslash$ denotes element-wise division, the optimization objective can be reformulated as $
\mathop{\arg\min}_{\mathbf{W}_{\text{B}}}\Vert \mathbf{Y}_{\text{B}}-\mathbf{W}_{\text{B}}\Vert_2^2$. Furthermore, the following lemma and proposition can be proven:
\begin{lemma}\label{lemma:1}
If $w_0\leq w_1\leq\cdots\leq w_{n-1}, n\geq2$, $\tilde{a}$ and $\tilde{b}$ are the minimizers of $f(a,b)=\sum_{k=0}^{n-1}\min\left\{(w_k-a)^2,(w_k-b)^2\right\}$. Suppose $a\leq b$, there exists $t\in\{0, 1, \cdots, n-2\}$ such that
\begin{equation}
w_t\leq\frac{\tilde{a}+\tilde{b}}{2}\leq w_{t+1},\tilde{a}=\frac{\sum_{k=0}^{t}w_k}{t+1},\tilde{b}=\frac{\sum_{k=t+1}^{n-1}w_k}{n-t-1}.
\end{equation}
\end{lemma}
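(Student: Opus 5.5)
The plan is to read the lemma as the one-dimensional two-means problem and to derive the stated form from first-order and exchange arguments at a minimizer. Throughout I write $\tilde a\le\tilde b$ for the minimizing pair; the statement's "$a\le b$" presumably means this.

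\emph{Step 1 (existence).} First I would show that minimizers exist. The function $f$ is continuous but not coercive: if $b\to\infty$ with $a$ fixed, $f$ stays bounded. To get around this, replace each of $a,b$ by its projection onto $[w_0,w_{n-1}]$. This never increases any distance $|w_k-\cdot|$, so it never increases $f$. Hence the infimum over $\mathbb{R}^2$ equals the infimum over the compact square $[w_0,w_{n-1}]^2$, and that infimum is attained there.

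\emph{Step 2 (assignment is a threshold cut).} Fix a minimizer $(\tilde a,\tilde b)$ with $\tilde a\le\tilde b$, and put $m=(\tilde a+\tilde b)/2$. For each $k$ the minimum in $f$ picks $\tilde a$ exactly when $|w_k-\tilde a|\le|w_k-\tilde b|$, which for $\tilde a<\tilde b$ means $w_k\le m$. Because the $w_k$ are sorted, the points assigned to $\tilde a$ form a prefix $\{0,\dots,t\}$ and the rest form a suffix. Points with $w_k=m$ are equidistant, so they can be placed on either side without changing $f$. I would choose $t$ as the largest index with $w_t\le m$, which gives $w_t\le m\le w_{t+1}$ once both groups are nonempty. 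The case $\tilde a=\tilde b$ needs a separate remark: then $f(\tilde a,\tilde a)$ is the one-center cost. Moving $\tilde b$ to $w_{n-1}$ strictly lowers the cost unless all $w_k$ coincide, so this case is excluded except when all points are equal.

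\emph{Step 3 (centers are cluster means).} Define
\begin{equation*}
g(a,b)=\sum_{k\le t}(w_k-a)^2+\sum_{k>t}(w_k-b)^2 .
\end{equation*}
We have $f\le g$ everywhere, and $f(\tilde a,\tilde b)=g(\tilde a,\tilde b)$. Since $g$ is a strictly convex quadratic in each variable, its unique minimizer is the pair of group means $(\bar a,\bar b)$. If $(\tilde a,\tilde b)\neq(\bar a,\bar b)$, then $f(\bar a,\bar b)\le g(\bar a,\bar b)<g(\tilde a,\tilde b)=f(\tilde a,\tilde b)$, contradicting minimality. So $\tilde a$ and $\tilde b$ are exactly the claimed averages. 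This requires both groups to be nonempty, i.e.\ $t\in\{0,\dots,n-2\}$.

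\emph{Step 4 (main obstacle: $t\in\{0,\dots,n-2\}$).} The step I expect to need the most care is ruling out an empty group, i.e.\ showing $t\le n-2$ and $t\ge 0$. Suppose no $w_k$ exceeds $m$. Then $f(\tilde a,\tilde b)=\sum_k(w_k-\tilde a)^2$, and replacing $\tilde b$ by $w_{n-1}$ strictly decreases $f$ unless $w_{n-1}=\tilde a$. In that exceptional case minimality forces every $w_k$ to equal $\tilde a$. So in every nondegenerate case both groups are nonempty, and the symmetric argument handles $t<0$. In the fully degenerate case where all $w_k$ are equal, the minimizers are not unique: any $\tilde a=w_0$ with $\tilde b$ arbitrary is optimal. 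There the lemma should be read as asserting the existence of a minimizer of the stated form, and $(w_0,w_0)$ with $t=0$ is one.
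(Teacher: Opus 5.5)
Your argument is correct, and its central step takes a different route from the paper's. The paper argues by contradiction with a one-variable local move. It fixes $\tilde b$ and notes that for $a\in[2w_t-\tilde b,\,2w_{t+1}-\tilde b]$ the threshold $(a+\tilde b)/2$ stays in $[w_t,w_{t+1}]$, so $f(\cdot,\tilde b)$ is a single quadratic there. It then moves $a$ to $\min\{2w_{t+1}-\tilde b,\ \text{group mean}\}$ to lower $f$, and leaves the remaining cases to ``similar arguments''.

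Your Step 3 instead uses the majorizer $g\ge f$ with equality at $(\tilde a,\tilde b)$ (the Lloyd-step argument). This lets you send both centers straight to the group means without checking that the partition survives the move. It removes the capped step and the case split on which side of its mean $\tilde a$ lies. It also removes the boundary case $w_t=m$, where $\tilde a$ sits at the left endpoint of the paper's interval and cannot be moved down inside it. Nothing is lost by going global. Because $g$ is convex, moving one center only part of the way toward its group mean, with the other fixed, already gives $f\le g<f(\tilde a,\tilde b)$. So your majorization also shows that a non-mean center is not even a local minimizer in its own coordinate, which is exactly what the paper's perturbation establishes.

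Your Steps 1, 2 and 4 also supply what the paper dismisses as clear or trivial:
\begin{itemize}
\item existence of the minimizers;
\item the fact that $\tilde a=\tilde b$ is impossible unless all points coincide;
\item nonemptiness of both groups (i.e.\ $t\in\{0,\dots,n-2\}$);
\item the observation that when all $w_k$ are equal the statement literally fails for minimizers such as $(w_0,\tilde b)$ with $\tilde b>w_0$, so it must be read as the existence of a minimizer of the stated form.
\end{itemize}

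One small slip: in Step 2, moving $\tilde b$ to $w_{n-1}$ does not lower the cost when $\tilde a=\tilde b=w_{n-1}>w_0$. Move it to any $w_j\neq\tilde a$ (e.g.\ $w_0$) instead. The same one-line move justifies your unproved claim in Step 4 that $w_{n-1}=\tilde a$ forces every $w_k$ to equal $\tilde a$.
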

\begin{proof}
Clearly, $\tilde{a}$ and $\tilde{b}$ exist; we proceed by contradiction. Furthermore, the conclusion is trivial if either all $w_k$ are identical or $\tilde{a}=\tilde{b}$. Moreover, the minimum can only be attained when $a$ and $b$ take values within the interval $[w_0,w_{n-1}]$. In the following, we assume $a,b\in[w_0,w_{n-1}]$. Without loss of generality, suppose $\tilde{a}<\sum_{k=0}^{t} w_k/(t+1)\leq w_{n-1}$ which implies that there exists $t$ satisfying $w_t \leq (\tilde{a} + \tilde{b})/2 < w_{t+1}$. So that $f(a,b)=\sum_{k=0}^{t}(w_k-a)^2+\sum_{k=t+1}^{n-1}(w_k-b)^2$.
Then for $a \in[2w_t - \tilde{b}, 2w_{t+1} - \tilde{b}]$, since $f(a,b)$ is a quadratic function in $a$, we have
\begin{equation}
f(\tilde{a},\tilde{b}) > f\left(\min\left\{2w_{t+1}-\tilde{b},\frac{\sum_{k=0}^{t} w_k}{t+1}\right\}, \tilde{b}\right).
\end{equation}
This contradicts the assumption. Similar arguments apply to the remaining cases, thus completing the proof of the lemma.
\end{proof}
\begin{proposition}\label{prop:1}
If a random matrix $\mathbf{X}\sim F$ possesses a symmetric probability density function $\phi(x)=\phi(-x)$, and the matrix $\mathbf{W}_{\text{B}}\in\{a,b\}$ is a binary-valued matrix, then a suboptimal solution to the optimization problem
\begin{equation}
\mathop{\arg\min}_{\mathbf{W}_{\text{B}}} \Vert \mathbf{X} - \mathbf{W}_{\text{B}} \Vert_2^2\label{eq:Prop2_Condition}
\end{equation}
satisfies $a+b=0$.
\end{proposition}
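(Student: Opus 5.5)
We pass to the population (large-matrix) version of the problem. For each entry the optimal binary assignment sends $x$ to whichever of $a,b$ is closer, so minimizing \eqref{eq:Prop2_Condition} over $\mathbf{W}_{\text{B}}$ and the levels $a\le b$ amounts to minimizing
\begin{equation*}
g(a,b)=\int_{-\infty}^{\infty}\min\left\{(x-a)^2,(x-b)^2\right\}\phi(x)\,dx .
\end{equation*}
This is the continuous analogue of $f(a,b)$ in Lemma~\ref{lemma:1}, with the empirical distribution of the sorted entries $w_0\le\cdots\le w_{n-1}$ replaced by $F$.

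The first step is the continuous counterpart of Lemma~\ref{lemma:1}. Any minimizer $(\tilde a,\tilde b)$ has a threshold $c=(\tilde a+\tilde b)/2$ such that $\tilde a=\mathbb{E}[x\mid x\le c]$ and $\tilde b=\mathbb{E}[x\mid x>c]$. The argument is the same as in the lemma: with the partition fixed, $g$ is a quadratic in each level, so each level must be the conditional mean of its cell. With the levels fixed, the best partition is the nearest-level (midpoint) partition.

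The second step uses symmetry. The map $(a,b)\mapsto(-b,-a)$ preserves $g$, because $\phi(x)=\phi(-x)$. Hence if $(\tilde a,\tilde b)$ is a minimizer, so is $(-\tilde b,-\tilde a)$. We look for the stationary point fixed by this reflection, namely $c=0$. The conditions of the first step then give
\begin{equation*}
\tilde b=\mathbb{E}[x\mid x>0]=-\mathbb{E}[x\mid x<0]=-\tilde a ,
\end{equation*}
so $a+b=0$.

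This point is self-consistent. Its threshold $(\tilde a+\tilde b)/2=0$ reproduces exactly the partition it was built from, so it satisfies all the optimality conditions of Lemma~\ref{lemma:1} simultaneously. That is why the statement calls it a \emph{suboptimal} (stationary, locally optimal) solution rather than the global optimum.

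The main obstacle is that symmetry alone does not force the global minimizer to be symmetric. The minimizer could be an asymmetric pair together with its mirror image, for example when $\phi$ is multimodal. For this reason I only claim the symmetric fixed point as a stationary or suboptimal solution, which matches the wording of Proposition~\ref{prop:1}.

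If a stronger statement is wanted, I would try to show that $h(c)=g$ evaluated along the stationary curve has $h'(0)=0$ by symmetry. Then I would check the sign of $h''(0)$, which involves $\phi(0)$ and $\mathbb{E}[x\mid x>0]$, to argue local optimality, for instance under a log-concave $\phi$. For the paper's purpose, the stationary symmetric solution suffices.
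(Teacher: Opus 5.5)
Your proof is correct and is essentially the paper's argument: both use the centroid/midpoint fixed-point conditions of Lemma~\ref{lemma:1} and observe that, by $\phi(x)=\phi(-x)$, the threshold $s=0$ is self-consistent, giving $\tilde b=\mathbb{E}[x\mid x>0]=-\tilde a$; your population functional $g(a,b)$ replaces the paper's heuristic step $w_t\approx w_{t+1}=s$ with a cleaner continuous formulation. Drop the parenthetical ``locally optimal'', since the fixed-point check gives only stationarity, and along your curve $h''(0)$ has the sign of $1-2\phi(0)\,\mathbb{E}[x\mid x>0]$, which can be negative (e.g.\ for sharply peaked, heavy-tailed $\phi$).
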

\begin{proof}
Based on the conclusion of Lemma~\ref{lemma:1}, a suboptimal solution to~\eqref{eq:Prop2_Condition} satisfies:
\begin{equation}
w_{t}\leq\frac{\mathbb{E}[\mathbf{X}|\mathbf{X}\leq w_{t}]+\mathbb{E}[\mathbf{X}|\mathbf{X}\geq w_{t+1}]}{2}\leq w_{t+1}.\label{eq:proof1}
\end{equation}
Assuming $w_t \approx w_{t+1} = s$, we obtain that $\mathbb{E}[\mathbf{X}|\mathbf{X}\leq s]+\mathbb{E}[\mathbf{X}|\mathbf{X}\geq s]=2s$. Clearly, when $s = 0$, since $\phi(x) = \phi(-x)$, \eqref{eq:proof1} holds. Therefore, $s = 0$ is a suboptimal solution, which implies $a + b = 0$ is a suboptimal solution.
\end{proof}
Noting that our optimization process does not favor positive or negative values—together with the common assumption that the original linear layer weights $\mathbf{W}$ follow a zero-mean distribution, we apply Proposition~\ref{prop:1} and find that using a binary matrix $\mathbf{W}_{\text{B}}$ with values from the set $\{a, -a\}$ constitutes a suboptimal solution. Given the Hadamard product relationship between $\mathbf{W}_{\text{B}}$ and $\mathbf{W}_{\text{L}}$, the scaling factors of $\mathbf{W}_{\text{B}}$ can be absorbed into $\mathbf{W}_{\text{L}}$. Therefore, $\mathbf{W}_{\text{B}}$ is constrained to the binary set $\{+1, -1\}$.

\subsubsection{The Optimization of \texorpdfstring{$\mathbf{W}_{\text{L}}$}{WL}}
Similarly, with the assumption that $(\mathbf{W} - \mathbf{W}_{\text{S}}) \oslash \mathbf{W}_{\text{B}} = \mathbf{Y}_{\text{L}}$, the optimization problem reduces to $\mathop{\arg\min}_{\mathbf{W}_{\text{L}}} \Vert \mathbf{Y}_{\text{L}} - \mathbf{W}_{\text{L}} \Vert_2^2$. Guided by the Eckart–Young theorem~\cite{EY-Theorem}, the optimal solution with a specified rank constraint is obtained by truncated singular value decomposition (SVD).

\subsubsection{The Joint Optimization of \texorpdfstring{$\mathbf{W}_{\text{B}}$}{WB} and \texorpdfstring{$\mathbf{W}_{\text{L}}$}{WL}}\label{sec:WB_WL_analysis}
Let $\mathbf{W} - \mathbf{W}_{\text{S}} = \mathbf{Y}_{\text{BL}}$. To alternately optimize $\mathbf{W}_{\text{B}}$ and $\mathbf{W}_{\text{L}}$, we naturally adopt the following straightforward initialization:
\begin{equation}
\begin{aligned}
\mathbf{W}_{\text{B}} &= \text{sign}(\mathbf{Y}_{\text{BL}}),\\
\mathbf{W}_{\text{L}} &= \mathbf{U}\mathbf{V}^{\mathrm{T}}\;\text{with}\;\mathbf{U}=\sqrt{\sigma_0}\mathbf{u}_0,\mathbf{V}=\sqrt{\sigma_0}\mathbf{v}_0,\\
|\mathbf{Y}_{\text{BL}}| &= \sum_{k=0}^{\min\{D_{\text{out}}, D_{\text{in}}\}}\sigma_k\mathbf{u}_k\mathbf{v}_k^{\mathrm{T}},\label{eq:WB_WL_Init}
\end{aligned}
\end{equation}
where $\text{sign}(\cdot)$ denotes a function that judges the input as positive or negative, non-negative numbers are denoted as 1 while negative numbers are denoted as 0, $\mathbf{U}$ and $\mathbf{V}$ are computed from $\mathbf{Y}_{\text{BL}}$ using a rank-1 truncated SVD.
\begin{proposition}\label{prop:2}
If $\mathbf{U}\in\mathbb{R}^{(D_{\text{out}},1)}$ and $\mathbf{V}\in\mathbb{R}^{(D_{\text{in}},1)}$ are the rank-1 truncated SVD results of $|\mathbf{Y}_{\text{BL}}|$, satisfying $\mathbf{W}_{\text{L}}=\mathbf{U}\mathbf{V}^{\mathrm{T}}$. Then, every element of $\mathbf{W}_{\text{L}}$ is non-negative, i.e. $\lvert \mathbf{W}_{\text{L}}\rvert = \mathbf{W}_{\text{L}}$.
\end{proposition}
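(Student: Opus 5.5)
This is the Perron--Frobenius phenomenon for the entrywise non-negative matrix $\mathbf{A}:=|\mathbf{Y}_{\text{BL}}|$. The plan is to show that its top singular vectors can be chosen with non-negative entries. Then $\mathbf{W}_{\text{L}}=\sigma_0\mathbf{u}_0\mathbf{v}_0^{\mathrm{T}}$ is a product of non-negative numbers entrywise. Since $\mathbf{U}=\sqrt{\sigma_0}\mathbf{u}_0$ and $\mathbf{V}=\sqrt{\sigma_0}\mathbf{v}_0$, each entry is $(\mathbf{W}_{\text{L}})_{ij}=\sigma_0 u_{0,i}v_{0,j}$. It therefore suffices to show $\sigma_0\ge 0$ (true by definition) and that $\mathbf{u}_0,\mathbf{v}_0$ can be taken entrywise non-negative. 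Because the statement concerns the product $\mathbf{U}\mathbf{V}^{\mathrm{T}}$, the global sign ambiguity $(\mathbf{u}_0,\mathbf{v}_0)\mapsto(-\mathbf{u}_0,-\mathbf{v}_0)$ leaves $\mathbf{W}_{\text{L}}$ unchanged and causes no trouble.

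\textbf{Step 1: variational characterization.} By the Eckart--Young theorem~\cite{EY-Theorem}, the rank-1 truncated SVD minimizes $\Vert\mathbf{A}-\mathbf{W}\Vert_2^2$ over rank-one $\mathbf{W}$. Equivalently, $(\mathbf{u}_0,\mathbf{v}_0)$ maximizes $\mathbf{u}^{\mathrm{T}}\mathbf{A}\mathbf{v}$ over unit vectors, with maximum value $\sigma_0$.

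\textbf{Step 2: absolute-value trick.} For any unit $\mathbf{u},\mathbf{v}$ we have $\mathbf{u}^{\mathrm{T}}\mathbf{A}\mathbf{v}=\sum_{ij}u_iA_{ij}v_j\le\sum_{ij}|u_i|A_{ij}|v_j|=|\mathbf{u}|^{\mathrm{T}}\mathbf{A}|\mathbf{v}|$, using $A_{ij}\ge0$. The vectors $|\mathbf{u}|$ and $|\mathbf{v}|$ are still unit vectors. Hence if $(\mathbf{u}_0,\mathbf{v}_0)$ is a maximizer, so is $(|\mathbf{u}_0|,|\mathbf{v}_0|)$.

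\textbf{Step 3: conclude $\mathbf{W}_{\text{L}}\ge 0$.} If $\sigma_0$ is a simple singular value, the maximizer is unique up to the joint sign flip. Then $(\mathbf{u}_0,\mathbf{v}_0)=\pm(|\mathbf{u}_0|,|\mathbf{v}_0|)$, and in either case $u_{0,i}v_{0,j}=|u_{0,i}||v_{0,j}|\ge0$.

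\textbf{Main obstacle.} The delicate point is the degenerate case where $\sigma_0$ is repeated, so that an arbitrary SVD routine could return mixed-sign vectors. I would handle it in one of two ways. The first option is to note that the statement only needs \emph{some} rank-1 truncated SVD, and Step 2 supplies the non-negative choice $(|\mathbf{u}_0|,|\mathbf{v}_0|)$, whose product gives the same optimal objective value. The second, if uniqueness is wanted, is to state the generic assumption that $\mathbf{A}$ has a simple top singular value (e.g.\ $\mathbf{A}$ with all entries positive, by Perron--Frobenius applied to $\mathbf{A}\mathbf{A}^{\mathrm{T}}$). I would also dispose of the trivial case $\mathbf{A}=\mathbf{0}$, where $\sigma_0=0$ and $\mathbf{W}_{\text{L}}=\mathbf{0}$.
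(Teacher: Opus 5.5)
Your argument is correct and is essentially the paper's. The paper observes that $|\mathbf{W}_{\text{L}}|=\sigma_0|\mathbf{u}_0||\mathbf{v}_0|^{\mathrm{T}}$ is also rank-1 and, by the entrywise reverse triangle inequality against the non-negative $|\mathbf{Y}_{\text{BL}}|$, approximates it at least as well as $\mathbf{W}_{\text{L}}$. That is exactly your inequality $\mathbf{u}^{\mathrm{T}}\mathbf{A}\mathbf{v}\le|\mathbf{u}|^{\mathrm{T}}\mathbf{A}|\mathbf{v}|$ written at the level of the Frobenius error, and the paper then concludes by uniqueness of the Eckart--Young minimizer.

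Your explicit handling of a repeated $\sigma_0$ is more careful than the paper. Its claim ``equality iff $\mathbf{W}_{\text{L}}=|\mathbf{W}_{\text{L}}|$'' silently assumes that uniqueness, and without it the statement can fail for a legitimate choice of singular vectors (e.g.\ $|\mathbf{Y}_{\text{BL}}|=\mathbf{I}_2$ with $\mathbf{u}_0=\mathbf{v}_0=(1,-1)^{\mathrm{T}}/\sqrt{2}$). Your caveat, either choose $(|\mathbf{u}_0|,|\mathbf{v}_0|)$ or assume $\sigma_0$ is simple, is the right fix.
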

\begin{proof}
Consider the Frobenius norm of the difference between each of these matrices and $|\mathbf{Y}_{\text{BL}}|$. Clearly, both $\mathbf{W}_{\text{L}}$ and $|\mathbf{W}_{\text{L}}|$ are rank-1 matrices, so we have (due to the Eckart-Young theorem):
\begin{equation}
\Vert \mathbf{W}_{\text{L}}-|\mathbf{Y}_{\text{BL}}|\Vert_F^2\leq\Vert |\mathbf{W}_{\text{L}}|-|\mathbf{Y}_{\text{BL}}|\Vert_F^2,
\end{equation}
where equality holds if and only if $\mathbf{W}_{\text{L}} = |\mathbf{W}_{\text{L}}|$.
However, by the triangle inequality,
\begin{equation}
\Vert \mathbf{W}_{\text{L}}-|\mathbf{Y}_{\text{BL}}|\Vert_F^2\geq\Vert |\mathbf{W}_{\text{L}}|-|\mathbf{Y}_{\text{BL}}|\Vert_F^2.
\end{equation}
So we have $\mathbf{W}_{\text{L}}=|\mathbf{W}_{\text{L}}|$ and the proposition is proven.
\end{proof}
When using the initialization in \eqref{eq:WB_WL_Init}, if $\mathbf{W}_{\text{B}}$ is fixed as $\text{sign}(\mathbf{Y}_{\text{BL}})$, the optimal solution for $\mathbf{W}_{\text{L}}$ is given by $\mathbf{U}\mathbf{V}^{\mathrm{T}}$. Conversely, if $\mathbf{W}_{\text{L}}$ is fixed, Proposition~\ref{prop:2} guarantees its non-negativity, and thus the optimal choice for $\mathbf{W}_{\text{B}}$ is $\text{sign}(\mathbf{Y}_{\text{BL}})$.

So \eqref{eq:WB_WL_Init} yields a suboptimal solution for both $\mathbf{W}_{\text{B}}$ and $\mathbf{W}_{\text{L}}$. Adopting the initialization from \eqref{eq:WB_WL_Init} significantly reduces the computational overhead of alternating optimization while maintaining efficacy. And the rank-1 setting will be discussed in the Section~\ref{sec:rank_setting}.

\subsection{Parameters}
\subsubsection{Sparsity}
Let the original matrix $\mathbf{W}$, along with $\mathbf{W}_{\text{S}}$ and $\mathbf{W}_{\text{L}}$, be $b$-bit wide data (e.g. $b=16$ for FP16). Suppose $\mathbf{W}_{\text{S}}$ contains $k$ non-zero elements, $\mathbf{W}_{\text{B}}$ is a binarized matrix where each element is stored using 1 bit, and $\mathbf{W}_{\text{L}}$ is a rank-1 matrix expressible as the product of two vectors with shapes $(D_{\text{out}}, 1)$ and $(1, D_{\text{in}})$, respectively. Thus, the compression ratio ($\text{CR}$) of SLaB is:  
\begin{equation}\label{eq:CR_Def}
\text{CR}=1-\frac{(\overbrace{bk}^{\mathbf{W}_{\text{S}}}+\overbrace{D_{\text{out}}D_{\text{in}}}^{\mathbf{W}_{\text{B}}}+\overbrace{b(D_{\text{out}} + D_{\text{in}})}^{\mathbf{W}_{\text{L}}})}{bD_{\text{out}}D_{\text{in}}}.
\end{equation}
Therefore, the percentage of non-zero elements retained in the sparse matrix $\mathbf{W}_{\text{S}}$ relative to the total elements can be derived:
\begin{equation}
\frac{k}{D_{\text{out}}D_{\text{in}}} = 1-\text{CR}-\frac{1}{b}-\frac{1}{D_{\text{out}}}-\frac{1}{D_{\text{in}}}.
\end{equation}

\subsubsection{Comparison Group Size}
Following~\cite{SparseGPT, Wanda}, we use comparison groups of size $(1, D_{\text{in}})$ (i.e., the number of elements in each group $N_g = D_{\text{in}}$), within which we prune by comparing the score matrices. This results in $\lfloor kN_g/(D_{\text{out}}D_{\text{in}})\rfloor=\lfloor k/D_{\text{out}}\rfloor$ non-zero elements retained per group. For the semi-structured pruning method \cite{NVsparse} like 2:4 and 4:8, we first apply semi-structured pruning and then perform group-wise pruning to reach the target sparsity.

\subsubsection{Rank and Iterations}\label{sec:rank_setting}
As shown in Figure~\ref{fig:rank_setting}, when all layers are pruned at a $50\%$ compression ratio, increasing the rank from 0 (corresponding to the Wanda method) to 1 significantly reduces the average Frobenius norm difference between the compressed and original weight. However, as the rank continues to increase, the reduction becomes much more gradual. Therefore, we adopt the rank-1 setting for simplicity.

\begin{figure}[!t]
\centering
\includegraphics[width=8cm]{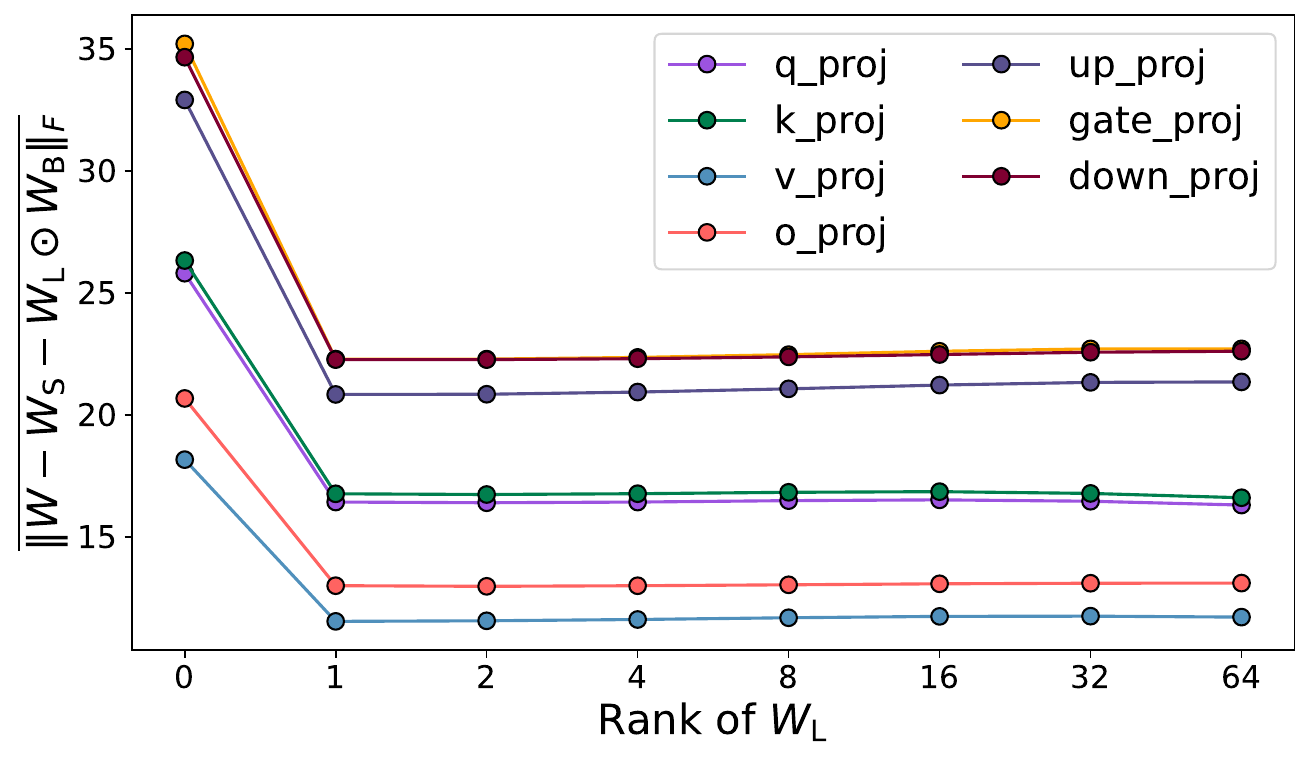}
\caption{Variation of the average Frobenius norm difference between compressed and original layers with respect to rank. Experiments are conducted on the Llama-2 7B model~\cite{Llama2} with a $50\%$ compression ratio.}\label{fig:rank_setting}
\end{figure}
\begin{algorithm}[!t]
\renewcommand{\algorithmicrequire}{\textbf{Input:}}
\renewcommand{\algorithmicensure}{\textbf{Output:}}
\caption{The SLaB algorithm.}
\label{alg:SLaB}
\begin{algorithmic}[1]
\REQUIRE Weight $\mathbf{W}\in\mathbb{R}^{(D_{\text{out}}, D_{\text{in}})}$, activation obtained through the forward propagation of the preceding layer $\mathbf{X}\in\mathbb{R}^{(D_{\text{out}}, D_{\text{in}})}$, number of iterations $s$, compression ratio $\text{CR}$, the bit-width of the output matrix (excluding the binary matrix) $b$
\ENSURE Sparse weight $\mathbf{W}_{\text{S}}\in\mathbb{R}^{(D_{\text{out}}, D_{\text{in}})}$, matrix $\mathbf{U}\in\mathbb{R}^{(D_{\text{out}}, 1)}, \mathbf{V}\in\mathbb{R}^{(D_{\text{in}}, 1)}$, binary matrix $\mathbf{W}_{\text{B}}\in\{1,-1\}^{(D_{\text{out}}, D_{\text{in}})}$ 
\STATE Init $\mathbf{W}_{\text{S}} \gets \mathbf{0}$, $\mathbf{U} \gets \mathbf{0}$, $\mathbf{V} \gets \mathbf{0}$, $\mathbf{W}_{\text{B}} \gets \mathbf{0}$
\STATE $\text{sparsity} \gets 1-\text{CR}-\frac{1}{b}-\frac{1}{D_{\text{out}}}-\frac{1}{\text{in}}$
\STATE $\mathbf{S}_{\text{X}} \gets \text{diag}\left(\sqrt{\mathbf{X}^{\mathrm{T}} \mathbf{X}}\right)\in\mathbb{R}^{(1, D_{\text{in}})}$
\FOR{step $t = 1$ \textbf{to} $s$}
\STATE $\mathbf{W}_{\text{B}} \gets \text{sign}(\mathbf{W}-\mathbf{W}_{\text{S}})$
\STATE $\mathbf{U},\mathbf{V} \gets \sqrt{\sigma_0}\mathbf{u}_0, \sqrt{\sigma_0}\mathbf{v}_0$\\
$\text{with}\;|\mathbf{W}-\mathbf{W}_\text{S}|=\sum_{k=0}^{\min\{D_{\text{out}}, D_{\text{in}}\}}\sigma_k\mathbf{u}_k\mathbf{v}_k^{\mathrm{T}}$
\STATE $\mathbf{S} \gets \left|\mathbf{W}-\mathbf{U}\mathbf{V}^{\mathrm{T}}\odot \mathbf{W}_{\text{B}}\right|\odot \mathbf{S}_{\text{X}}$
\STATE $\mathbf{W}_{\text{S}} \gets \text{HardThreshold}(\mathbf{S}, \text{sparsity})\oslash \mathbf{S}_{\text{X}}$
\ENDFOR
\RETURN $\mathbf{W}_{\text{S}}, \mathbf{U}, \mathbf{V}, \mathbf{W}_{\text{B}}$
\end{algorithmic}
\end{algorithm}

In addition to the parameters mentioned above, the number of iterations $s$ in the alternating optimization algorithm also affects the outcome. We will conduct hyperparameter exploration on this in the experimental section.

\subsection{Full Algorithm Pseudocode}
Algorithm~\ref{alg:SLaB} presents the complete pruning process of a linear layer using SLaB. The HardThreshold function prunes a given matrix by magnitude according to a specified CR. Comparison group size is not explicitly shown in Algorithm~\ref{alg:SLaB}.

\section{Experiments}
\subsection{Experiments Setup}
\subsubsection{Models and Baselines}
We perform pruning and analysis on the Llama-2 7B~\cite{Llama2}, Llama-3 8B, and Llama-3.2 1B~\cite{Llama3} models using PyTorch and the Transformers library provided by Hugging Face. The SLaB is a one-shot pruning method, so that we compare results against two SOTA, also one-shot approaches: SparseGPT~\cite{SparseGPT} and Wanda~\cite{Wanda}.

\subsubsection{Calibration Data}
Consistent with the approach in SparseGPT, we construct the calibration dataset using 128 sequences of length 2048, which are randomly sampled from the first shard of the C4 training set~\cite{C4}. To ensure a fair comparison, all pruning algorithms employ this identical calibration dataset (where applicable).
\subsubsection{Evaluation}
We evaluate the performance of the pruned models across both zero-shot tasks and language modeling capabilities. We select seven tasks include ARC-C, ARC-E~\cite{Arc}, BoolQ~\cite{Boolq}, HellaSwag~\cite{HellaSwag}, PIQA~\cite{PIQA}, RTE~\cite{GLUE}, and WinoGrande~\cite{WinoGrande} from the LM-Eval-Harness~\cite{LM-Eval-Harness} for zero-shot evaluation. Language modeling capability is quantified by measuring perplexity on the WikiText-2 dataset~\cite{WikiText-2}.

\subsubsection{Parameters}
Like SparseGPT and Wanda, we focus our pruning efforts on all linear layers, excluding the first embedding layers and the final classification head. The pruning is carried out in different categories: unstructured/structured sparsity and CR (Compression Ratio). For unstructured sparsity, we evaluate performance with $\text{CR}=50\%,60\%,70\%,80\%$; For structured sparsity, we evaluate two specific sparsity patterns: 2:4 and 4:8, both configured with $\text{CR}=50\%$. Regarding the comparison group size, we set it to $(1, D_{\text{in}})$. The number of iterations for SLaB is set to 20.
\begin{table}[!t]
\centering
\caption{Comparison of perplexity on WikiText-2 and average zero-shot accuracies ($\%$) at different compression rates.}\label{tab:main_result}
\setlength{\tabcolsep}{1.5mm}
\scriptsize
\begin{threeparttable}
\begin{tabular}{lw{c}{1.3cm}w{c}{0.6cm}w{c}{0.6cm}w{c}{0.6cm}w{c}{0.6cm}w{c}{0.6cm}w{c}{0.6cm}}
\toprule
\multirow{2}{*}{Method} & \multirow{2}{*}{Sparsity($\text{CR}$)} & \multicolumn{2}{c}{Llama-3.2 1B} & \multicolumn{2}{c}{Llama-2 7B} & \multicolumn{2}{c}{Llama-3 8B} \\
~ & ~ & ppl\tnote{2}\,$\downarrow$ & acc\tnote{2}\,$\uparrow$ & ppl$\downarrow$ & acc$\uparrow$ & ppl$\downarrow$ & acc$\uparrow$ \\
\midrule
Dense&$0\%$&$9.06$&$58.3$&$5.12$&$68.4$&$5.75$&$72.9$\\
\midrule
SparseGPT&US\tnote{1} ($50\%$)&$18.09$&$51.2$&$6.52$&$63.9$&$8.73$&$66.9$\\
Wanda&US ($50\%$)&$21.37$&$48.5$&$6.45$&$64.0$&$9.15$&$65.4$\\
\rowcolor{gray!15}SLaB&US ($50\%$)&$\pmb{11.57}$&$\pmb{55.8}$&$\pmb{5.49}$&$\pmb{66.2}$&$\pmb{6.67}$&$\pmb{70.7}$\\
\midrule
SparseGPT&4:8 ($50\%$)&$21.64$&$48.0$&$7.94$&$61.6$&$11.33$&$61.3$\\
Wanda&4:8 ($50\%$)&$40.47$&$44.3$&$8.03$&$60.8$&$13.41$&$65.4$\\
\rowcolor{gray!15}SLaB&4:8 ($50\%$)&$\pmb{12.43}$&$\pmb{52.5}$&$\pmb{5.61}$&$\pmb{65.3}$&$\pmb{6.93}$&$\pmb{70.1}$\\
\midrule
SparseGPT&2:4 ($50\%$)&$30.14$&$46.5$&$10.37$&$58.9$&$14.81$&$57.4$\\
Wanda&2:4 ($50\%$)&$79.06$&$41.3$&$11.40$&$55.6$&$22.52$&$52.2$\\
\rowcolor{gray!15}SLaB&2:4 ($50\%$)&$\pmb{14.02}$&$\pmb{52.5}$&$\pmb{5.77}$&$\pmb{65.3}$&$\pmb{7.32}$&$\pmb{70.1}$\\
\midrule
SparseGPT&US ($60\%$)&$44.43$&$45.2$&$9.52$&$58.7$&$14.45$&$59.0$\\
Wanda&US ($60\%$)&$71.92$&$41.5$&$10.01$&$57.1$&$21.37$&$53.3$\\
\rowcolor{gray!15}SLaB&US ($60\%$)&$\pmb{14.73}$&$\pmb{51.7}$&$\pmb{5.85}$&$\pmb{65.0}$&$\pmb{7.54}$&$\pmb{69.2}$\\
\midrule
SparseGPT&US ($70\%$)&$132.62$&$40.8$&$24.84$&$47.6$&$38.52$&$46.9$\\
Wanda&US ($70\%$)&$412.56$&$37.4$&$71.58$&$39.9$&$107.00$&$39.3$\\
\rowcolor{gray!15}SLaB&US ($70\%$)&$\pmb{26.06}$&$\pmb{46.6}$&$\pmb{6.88}$&$\pmb{62.2}$&$\pmb{10.35}$&$\pmb{64.3}$\\
\midrule
SparseGPT&US ($80\%$)&$471.50$&$38.5$&$111.27$&$37.6$&$183.01$&$39.5$\\
Wanda&US ($80\%$)&$1.34\mathrm{e}{4}$&$36.6$&$3.60\mathrm{e}{3}$&$36.1$&$1.25\mathrm{e}{3}$&$37.2$\\
\rowcolor{gray!15}SLaB&US ($80\%$)&$\pmb{113.77}$&$\pmb{41.5}$&$\pmb{13.63}$&$\pmb{53.4}$&$\pmb{27.02}$&$\pmb{49.3}$\\
\bottomrule
\end{tabular}
\begin{tablenotes}
\item[1] US refers to unstructured sparsity.
\item[2] ppl refers to perplexity (lower is better) and acc refers to accuracy (higher is better).
\end{tablenotes}
\end{threeparttable}
\end{table}
\subsection{Results}
Table~\ref{tab:main_result} presents perplexity on the WikiText-2 dataset and average accuracies on zero-shot tasks for SLaB and baselines. Using $50\%$ compression ratio for the Llama-3.2 1B model, SLaB reduces perplexity by $36.04\%$ and improves accuracy by $8.98\%$ compared to the best-performing baseline.

\subsection{Hyperparameter Exploration}
Table~\ref{tab:hyper_parameter} presents the impact of the comparison group size and the number of iterations on the results.
\begin{table}[!t]
\caption{Hyperparameter exploration with Llama-2 7B ($\text{CR}=50\%$) model.}\label{tab:hyper_parameter}
\centering
\setlength{\tabcolsep}{2mm}
\scriptsize
\begin{tabular}{cccccc}
\toprule
Comparison&\multirow{2}{*}{$\left(1, \displaystyle\frac{D_{\text{in}}}{32}\right)$}&\multirow{2}{*}{$\left(1, \displaystyle\frac{D_{\text{in}}}{16}\right)$}&\multirow{2}{*}{$(1, D_{\text{in}})$}&\multirow{2}{*}{$(16, D_{\text{in}})$}&\multirow{2}{*}{$(32, D_{\text{in}})$}\\
Group& ~ & ~ & ~ & ~ & ~ \\
\midrule
ppl$\downarrow$&$5.516$&$5.491$&$5.493$&$5.544$&$5.546$\\
acc$\uparrow$($\%$)&$65.6$&$65.8$&$66.2$&$65.9$&$66.2$\\
\bottomrule
\toprule
Iterations&1&10&20&30&40\\
\midrule
ppl($\downarrow$)&$5.678$&$5.531$&$5.493$&$5.480$&$5.477$\\
\bottomrule
\end{tabular}
\end{table}

\subsection{Ablation Study}
As shown in Table~\ref{tab:ablation}, the incorporation of both the $\mathbf{W}_{\text{L}}$ and $\mathbf{W}_{\text{B}}$ components has a compensatory effect on $\mathbf{W}_{\text{S}}$, thereby enhancing the model's accuracy.

\begin{table}[!t]
\centering
\caption{Ablation study with Llama-2 7B (2:4, $\text{CR}=50\%$) model.}\label{tab:ablation}
\setlength{\tabcolsep}{2mm}
\scriptsize
\begin{threeparttable}
\begin{tabular}{cccccc}
\toprule
Accuracy ($\%$)&ARC-C&ARC-E&RTE&WinoGrande&Avg\\
\midrule
$\mathbf{W}_{\text{S}}$&$32.0$&$58.2$&$53.1$&$62.0$&$49.8$\\
$\mathbf{W}_{\text{S}}+\mathbf{W}_{\text{L}}(r=16)$&$33.4$&$59.2$&$53.8$&$64.8$&$51.2$\\
$\mathbf{W}_{\text{S}}+\text{factor}\tnote{*}\odot\mathbf{W}_{\text{B}}$&$42.1$&$71.8$&$55.6$&$68.2$&$58.2$\\
$\mathbf{W}_{\text{S}}+\mathbf{W}_{\text{L}}\odot\mathbf{W}_{\text{B}}$&$43.2$&$71.3$&$57.0$&$68.6$&$58.9$\\
\bottomrule
\end{tabular}
\begin{tablenotes}
\item[*] factor refers to the quantization factor vector.
\end{tablenotes}
\end{threeparttable}
\end{table}

\section{Conclusion}
In this work, we propose a novel method called SLaB for compressing LLMs. SLaB replaces weights with a combination of sparse, binary, and low-rank components, achieving better performance without any training. This work provides a new perspective for future research on combining sparsity with other compression techniques in a complementary manner.

\newpage

\bibliographystyle{IEEEtran}
\bibliography{ref}

@TechReport{GPT2,
  author      = {Radford, Alec and Wu, Jeffrey and Child, Rewon and Luan, David and Amodei, Dario and Sutskever, Ilya and others},
  institution = {OpenAI},
  title       = {Language models are unsupervised multitask learners},
  year        = {2019},
  number      = {8},
  journal     = {OpenAI blog},
  pages       = {9},
  volume      = {1},
}

@Article{GPT3,
  author  = {Brown, Tom and Mann, Benjamin and Ryder, Nick and Subbiah, Melanie and Kaplan, Jared D and Dhariwal, Prafulla and Neelakantan, Arvind and Shyam, Pranav and Sastry, Girish and Askell, Amanda and others},
  journal = {Advances in neural information processing systems},
  title   = {Language models are few-shot learners},
  year    = {2020},
  pages   = {1877--1901},
  volume  = {33},
}

@InProceedings{BERT,
  author    = {Devlin, Jacob and Chang, Ming-Wei and Lee, Kenton and Toutanova, Kristina},
  booktitle = {Proceedings of the 2019 conference of the North American chapter of the association for computational linguistics: human language technologies, volume 1 (long and short papers)},
  title     = {Bert: Pre-training of deep bidirectional transformers for language understanding},
  year      = {2019},
  pages     = {4171--4186},
}

@Article{SliceGPT,
  author  = {Ashkboos, Saleh and Croci, Maximilian L and Nascimento, Marcelo Gennari do and Hoefler, Torsten and Hensman, James},
  journal = {arXiv preprint arXiv:2401.15024},
  title   = {Slicegpt: Compress large language models by deleting rows and columns},
  year    = {2024},
}

@Article{Wanda,
  author  = {Sun, Mingjie and Liu, Zhuang and Bair, Anna and Kolter, J Zico},
  journal = {arXiv preprint arXiv:2306.11695},
  title   = {A simple and effective pruning approach for large language models},
  year    = {2023},
}

@InProceedings{SparseGPT,
  author       = {Frantar, Elias and Alistarh, Dan},
  booktitle    = {International Conference on Machine Learning},
  title        = {Sparsegpt: Massive language models can be accurately pruned in one-shot},
  year         = {2023},
  organization = {PMLR},
  pages        = {10323--10337},
}

@Article{Modelcompression,
  author  = {Han, Song and Mao, Huizi and Dally, William J},
  journal = {arXiv preprint arXiv:1510.00149},
  title   = {Deep compression: Compressing deep neural networks with pruning, trained quantization and huffman coding},
  year    = {2015},
}

@Article{Modelcompression2,
  author  = {Hinton, Geoffrey and Vinyals, Oriol and Dean, Jeff},
  journal = {arXiv preprint arXiv:1503.02531},
  title   = {Distilling the knowledge in a neural network},
  year    = {2015},
}

@Article{Modelcompression3,
  author  = {Frankle, Jonathan and Carbin, Michael},
  journal = {arXiv preprint arXiv:1803.03635},
  title   = {The lottery ticket hypothesis: Finding sparse, trainable neural networks},
  year    = {2018},
}

@Article{Modelcompression4,
  author  = {Fan, Angela and Stock, Pierre and Graham, Benjamin and Grave, Edouard and Gribonval, R{\'e}mi and Jegou, Herve and Joulin, Armand},
  journal = {arXiv preprint arXiv:2004.07320},
  title   = {Training with quantization noise for extreme model compression},
  year    = {2020},
}

@Article{OBD,
  author  = {LeCun, Yann and Denker, John and Solla, Sara},
  journal = {Advances in neural information processing systems},
  title   = {Optimal brain damage},
  year    = {1989},
  volume  = {2},
}

@Article{OBS,
  author  = {Hassibi, Babak and Stork, David},
  journal = {Advances in neural information processing systems},
  title   = {Second order derivatives for network pruning: Optimal brain surgeon},
  year    = {1992},
  volume  = {5},
}

@Article{ASVD,
  author  = {Yuan, Zhihang and Shang, Yuzhang and Song, Yue and Wu, Qiang and Yan, Yan and Sun, Guangyu},
  journal = {arXiv preprint arXiv:2312.05821},
  title   = {Asvd: Activation-aware singular value decomposition for compressing large language models},
  year    = {2023},
}

@Article{SVD-LLM,
  author  = {Wang, Xin and Zheng, Yu and Wan, Zhongwei and Zhang, Mi},
  journal = {arXiv preprint arXiv:2403.07378},
  title   = {Svd-llm: Truncation-aware singular value decomposition for large language model compression},
  year    = {2024},
}

@Article{DSNoT,
  author  = {Zhang, Yuxin and Zhao, Lirui and Lin, Mingbao and Sun, Yunyun and Yao, Yiwu and Han, Xingjia and Tanner, Jared and Liu, Shiwei and Ji, Rongrong},
  journal = {arXiv preprint arXiv:2310.08915},
  title   = {Dynamic sparse no training: Training-free fine-tuning for sparse llms},
  year    = {2023},
}

@Article{EY-Theorem,
  author    = {Eckart, Carl and Young, Gale},
  journal   = {Psychometrika},
  title     = {The approximation of one matrix by another of lower rank},
  year      = {1936},
  number    = {3},
  pages     = {211--218},
  volume    = {1},
  publisher = {Springer-Verlag},
}

@InProceedings{RPCA3,
  author    = {Zhou, Tianyi and Tao, Dacheng},
  booktitle = {Proceedings of the 28th International Conference on Machine Learning, ICML 2011},
  title     = {Godec: Randomized low-rank \& sparse matrix decomposition in noisy case},
  year      = {2011},
}

@Article{RPCA2,
  author    = {Cand{\`e}s, Emmanuel J and Li, Xiaodong and Ma, Yi and Wright, John},
  journal   = {Journal of the ACM (JACM)},
  title     = {Robust principal component analysis?},
  year      = {2011},
  number    = {3},
  pages     = {1--37},
  volume    = {58},
  publisher = {ACM New York, NY, USA},
}

@Article{RPCA,
  author  = {Wright, John and Ganesh, Arvind and Rao, Shankar and Peng, Yigang and Ma, Yi},
  journal = {Advances in neural information processing systems},
  title   = {Robust principal component analysis: Exact recovery of corrupted low-rank matrices via convex optimization},
  year    = {2009},
  volume  = {22},
}

@Article{C4,
  author  = {Raffel, Colin and Shazeer, Noam and Roberts, Adam and Lee, Katherine and Narang, Sharan and Matena, Michael and Zhou, Yanqi and Li, Wei and Liu, Peter J},
  journal = {Journal of machine learning research},
  title   = {Exploring the limits of transfer learning with a unified text-to-text transformer},
  year    = {2020},
  number  = {140},
  pages   = {1--67},
  volume  = {21},
}

@Software{LM-Eval-Harness,
  author    = {Lintang Sutawika and Hailey Schoelkopf and Leo Gao and Baber Abbasi and Stella Biderman and Jonathan Tow and ben fattori and Charles Lovering and farzanehnakhaee70 and Jason Phang and Anish Thite and Fazz and Thomas Wang and Niklas Muennighoff and Aflah and sdtblck and nopperl and gakada and tttyuntian and researcher2 and Julen Etxaniz and Chris and Hanwool Albert Lee and Khalid and Zdeněk Kasner and LSinev and KonradSzafer and Jeffrey Hsu and Anjor Kanekar and Pawan Sasanka Ammanamanchi},
  month     = sep,
  publisher = {Zenodo},
  title     = {EleutherAI/lm-evaluation-harness: v0.4.4},
  url       = {https://doi.org/10.5281/zenodo.13694023},
  version   = {v0.4.4},
  year      = {2024},
}

@Article{Wikitext-2,
  author  = {Merity, Stephen and Xiong, Caiming and Bradbury, James and Socher, Richard},
  journal = {arXiv preprint arXiv:1609.07843},
  title   = {Pointer sentinel mixture models},
  year    = {2016},
}

@Article{Arc,
  author  = {Clark, Peter and Cowhey, Isaac and Etzioni, Oren and Khot, Tushar and Sabharwal, Ashish and Schoenick, Carissa and Tafjord, Oyvind},
  journal = {arXiv preprint arXiv:1803.05457},
  title   = {Think you have solved question answering? try arc, the ai2 reasoning challenge},
  year    = {2018},
}

@Article{Boolq,
  author  = {Clark, Christopher and Lee, Kenton and Chang, Ming-Wei and Kwiatkowski, Tom and Collins, Michael and Toutanova, Kristina},
  journal = {arXiv preprint arXiv:1905.10044},
  title   = {Boolq: Exploring the surprising difficulty of natural yes/no questions},
  year    = {2019},
}

@Article{HellaSwag,
  author  = {Zellers, Rowan and Holtzman, Ari and Bisk, Yonatan and Farhadi, Ali and Choi, Yejin},
  journal = {arXiv preprint arXiv:1905.07830},
  title   = {Hellaswag: Can a machine really finish your sentence?},
  year    = {2019},
}

@InProceedings{PIQA,
  author    = {Bisk, Yonatan and Zellers, Rowan and Gao, Jianfeng and Choi, Yejin and others},
  booktitle = {Proceedings of the AAAI conference on artificial intelligence},
  title     = {Piqa: Reasoning about physical commonsense in natural language},
  year      = {2020},
  number    = {05},
  pages     = {7432--7439},
  volume    = {34},
}

@InProceedings{GLUE,
  author    = {Wang, Alex and Singh, Amanpreet and Michael, Julian and Hill, Felix and Levy, Omer and Bowman, Samuel},
  booktitle = {Proceedings of the 2018 EMNLP workshop BlackboxNLP: Analyzing and interpreting neural networks for NLP},
  title     = {GLUE: A multi-task benchmark and analysis platform for natural language understanding},
  year      = {2018},
  pages     = {353--355},
}

@Article{WinoGrande,
  author    = {Sakaguchi, Keisuke and Bras, Ronan Le and Bhagavatula, Chandra and Choi, Yejin},
  journal   = {Communications of the ACM},
  title     = {Winogrande: An adversarial winograd schema challenge at scale},
  year      = {2021},
  number    = {9},
  pages     = {99--106},
  volume    = {64},
  publisher = {ACM New York, NY, USA},
}

@Article{NVsparse,
  author  = {Mishra, Asit and Latorre, Jorge Albericio and Pool, Jeff and Stosic, Darko and Stosic, Dusan and Venkatesh, Ganesh and Yu, Chong and Micikevicius, Paulius},
  journal = {arXiv preprint arXiv:2104.08378},
  title   = {Accelerating sparse deep neural networks},
  year    = {2021},
}

@Article{Llama2,
  author  = {Touvron, Hugo and Martin, Louis and Stone, Kevin and Albert, Peter and Almahairi, Amjad and Babaei, Yasmine and Bashlykov, Nikolay and Batra, Soumya and Bhargava, Prajjwal and Bhosale, Shruti and others},
  journal = {arXiv preprint arXiv:2307.09288},
  title   = {Llama 2: Open foundation and fine-tuned chat models},
  year    = {2023},
}

@Article{Llama3,
  author  = {Grattafiori, Aaron and Dubey, Abhimanyu and Jauhri, Abhinav and Pandey, Abhinav and Kadian, Abhishek and Al-Dahle, Ahmad and Letman, Aiesha and Mathur, Akhil and Schelten, Alan and Vaughan, Alex and others},
  journal = {arXiv preprint arXiv:2407.21783},
  title   = {The llama 3 herd of models},
  year    = {2024},
}
\end{document}